
\documentclass[10pt,twocolumn,letterpaper]{article}

\usepackage{cvpr}              
\usepackage[accsupp]{axessibility}
\usepackage[table,dvipsnames]{xcolor}
\usepackage{makecell}
\usepackage{tcolorbox}
\usepackage{enumerate}

\usepackage[shortlabels]{enumitem}
\usepackage{amsmath}
\usepackage{diagbox}
\usepackage{amssymb}
\usepackage{booktabs}
\usepackage{multirow}
\usepackage{times}
\usepackage{lipsum}
\usepackage{subcaption}
\usepackage{diagbox}
\usepackage{algorithm}
\usepackage{algpseudocode}

\usepackage{bm,cuted}
\usepackage{graphicx}
\usepackage{tabularx} 
\usepackage[safe]{tipa}
\usepackage{multirow}
\usepackage{xcolor}
\usepackage{dsfont}
\usepackage{upgreek}
\usepackage{url}

\usepackage{pifont}
\newcommand{\cmark}{\ding{51}}%
\newcommand{\xmark}{\ding{55}}%
\makeatletter
\newcommand\HUGE{\@setfontsize\Huge{20}{30}}
\makeatother









\def\eqref#1{equation~\ref{#1}}









\def\1{\bm{1}}










\DeclareMathAlphabet{\mathsfit}{\encodingdefault}{\sfdefault}{m}{sl}
\SetMathAlphabet{\mathsfit}{bold}{\encodingdefault}{\sfdefault}{bx}{n}













\usepackage[accsupp]{axessibility}
\usepackage{amsthm}

\providecommand{\FullStop}{\text{~\@.\xspace}}
\providecommand{\Comma}{\text{~,\xspace}}

\newtheorem{theorem}{Theorem}

\theoremstyle{remark}
\newtheorem{remark}{Remark}[theorem]

\usepackage{xspace}
\usepackage{hhline}   
\usepackage{siunitx}  

\usepackage{graphicx}
\graphicspath{{./images/}}

%



\makeatletter
\let\NAT@parse\undefined
\makeatother
\usepackage[bookmarks=false,linkcolor=blue, urlcolor=blue, citecolor=blue]{hyperref} 

\hypersetup{
    colorlinks=true,
    linkcolor=red,
    filecolor=magenta,      
    urlcolor=blue,
    pdfstartview={FitH},
    citecolor =blue
    }

\usepackage[capitalize]{cleveref}
\crefname{section}{Sec.}{Secs.}
\Crefname{section}{Section}{Sections}
\Crefname{table}{Table}{Tables}
\crefname{table}{Tab.}{Tabs.}



\begin{document}










\title{EgoMusic-driven Human Dance Motion Estimation with Skeleton Mamba}
\author{Quang Nguyen$^{1}$, Nhat Le$^{2}$, Baoru Huang$^{7,*}$, Minh Nhat Vu$^{3}$, Chengcheng Tang$^{4}$, \\ Van Nguyen$^{1}$, Ngan Le$^{5}$, Thieu Vo$^{6}$, Anh Nguyen$^{7}$\\
{\small $^{1}$FPT Software AI Center}
{\small $^{2}$The University of Western Australia}
{\small $^{3}$TU Wien} 
{\small $^{4}$Meta} \\
{\small $^{5}$University of Arkansas}
{\small $^{6}$National University of Singapore}
{\small $^{7}$University of Liverpool}
{\small $^{*}$Corresponding author}\\
{\small \href{https://zquang2202.github.io/SkeletonMamba/}{https://zquang2202.github.io/SkeletonMamba/}}}


\twocolumn[{%
\renewcommand\twocolumn[1][]{#1}%
\maketitle
\begin{center}
  \centering
  \vspace{-5.5ex}
  \captionsetup{type=figure}
  \Large
\resizebox{\linewidth}{!}{
\setlength{\tabcolsep}{2pt}
\begin{tabular}{ccccc}

\shortstack{\includegraphics[width=0.9\linewidth]{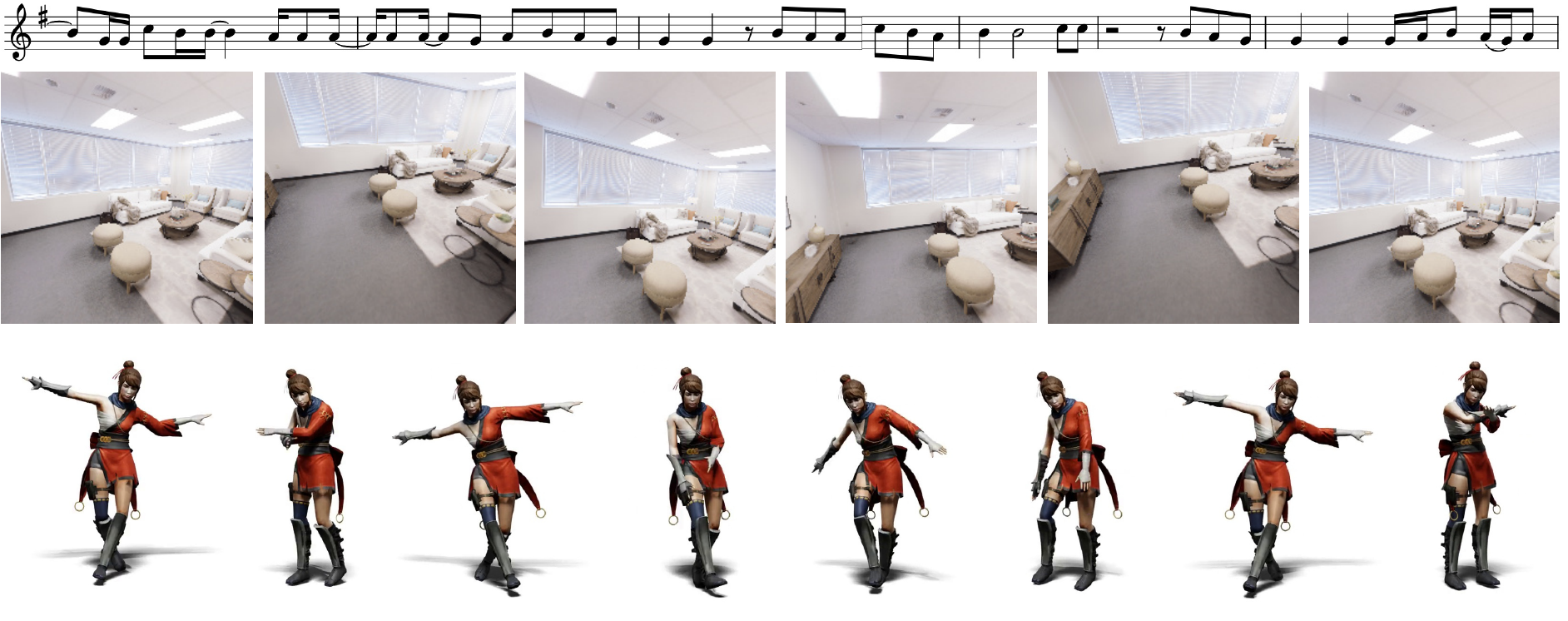}}\\[2.5pt]

\end{tabular}
}
\vspace{-4ex}
    \captionof{figure}{We present a new dataset and method for estimating human dance motion from the egocentric video and music.}
    \label{fig: IntroVis}
\end{center}%
    }]


\begin{abstract}
\vspace{-5ex}
Estimating human dance motion is a challenging task with various industrial applications. Recently, many efforts have focused on predicting human dance motion using either egocentric video or music as input. However, the task of jointly estimating human motion from both egocentric video and music remains largely unexplored. In this paper, we aim to develop a new method that predicts human dance motion from both egocentric video and music. In practice, the egocentric view often obscures much of the body, making accurate full-pose estimation challenging. Additionally, incorporating music requires the generated head and body movements to align well with both visual and musical inputs. We first introduce EgoAIST++, a new large-scale dataset that combines both egocentric views and music with more than 36 hours of dancing motion. Drawing on the success of diffusion models and Mamba on modeling sequences, we develop an EgoMusic Motion Network with a core Skeleton Mamba that explicitly captures the skeleton structure of the human body. We illustrate that our approach is theoretically supportive. Intensive experiments show that our method clearly outperforms state-of-the-art approaches and generalizes effectively to real-world data. 
\end{abstract}
\vspace{-4ex}

\section{Introduction}
\vspace{-2ex}

Dance is a fundamental form of human expression, creativity, and is deeply embedded in cultural and social contexts~\cite{hanna1987dance, lamothe2019dancing}. Estimating full-body dance motion is a crucial task with many industrial applications, such as dance education~\cite{guo2022dancevis, difini2021human, le2023controllable}, virtual metaverses~\cite{lam2022human, kim2023performing}, or film animation~\cite{chan2019everybody, wang2024disco}. While several works have focused on human dance pose estimation, they mostly tackle the problem using the input from third-person video ~\cite{hu2021unsupervised, bera2023fine, rani2022effectual, zhao2019dance, qianwen2024application} or motion tracking device~\cite{matsuyama2019hybrid, matsuyama2021deep}. In practice, third-person video methods suffer from occlusions, viewpoint variations, and depth ambiguity, while motion-tracking devices require costly hardware, making them less practical for real-world AR/VR or metaverse applications. These constraints highlight the need for an alternative approach, such as first-person (egocentric) view dance motion estimation.


Recently, egocentric input has been utilized to estimate human motions in everyday activities such as walking and running~\cite{jiang2021egocentric, wang2021estimating, wang2024egocentric, millerdurai2024eventego3d}. 
A key challenge in egocentric pose estimation is that much of the body often falls outside the camera’s view, creating ambiguities in full-body capture. This issue is even more pronounced in dance motion, where movements are more complex and dynamic. To overcome this, many works incorporate motion tracking sensors, which require costly hardware and limit accessibility~\cite{matsuyama2019hybrid, matsuyama2021deep}. 
A promising approach for accurate dance motion estimation is to leverage music as an additional modality. Research has shown that music, through its rhythm, tempo, and dynamics, can provide meaningful cues for generating realistic movements~\cite{lamothe2019dancing, brown2008neuroscience}. 
Therefore, we propose integrating \textit{music and egocentric video} for dance motion estimation. We hypothesize that combining these two complementary modalities enables more accurate human dance motion estimation. However, the challenge lies in two fundamental tasks: \textit{i)} creating a large-scale dataset for dance motion estimation from egocentric and music, and \textit{ii)} designing a model capable of understanding the human skeleton structure and effectively coordinating multimodal inputs to synchronize head and body motion with the music


To learn human motion, transformer is a widely used technique~\cite{zhang2022motiondiffuse, chen2023executing, tevet2023human, tseng2023edge, li2021ai}. However, transformer has quadratic complexity and struggles to capture structure dependencies. On the other hand,  State Space Models (or Mamba)~\cite{mamba2, gu2023mamba} have shown great potential on several tasks over transformer-based models, such as graph analysis~\cite{wang2024graph}, video analysis~\cite{li2025videomamba}, and image generation~\cite{hu2024zigma}. However, directly adapting Mamba to human motion data presents a challenge due to the dynamics of both spatial and temporal structures of the human body.
Previous Mamba-based models~\cite{zhang2024motion, wang2024text, qian2024smcd, zhang2024infinimotion} on human motion usually simplify each frame of a human pose as a single latent vector or disregard the spatial order of joints within the human skeleton, which limits their ability to capture fine-grained spatial dynamics. This drawback makes it challenging to generate coherent motion as the head and lower body may fail to align naturally, leading to poor coordination between egocentric-driven head and music-influenced body movement. 

In this work, we first introduce a new dataset for human dance pose estimation from egocentric and music inputs. We then propose Skeleton Mamba, a new Mamba model designed to capture spatial structures while preserving temporal coherence. Our designed method enables synchronized head and body movements responsive to egocentric and music inputs.  Our approach explicitly models the spatial structure of joints and their hierarchical dependencies, allowing for more coherent motion generation that preserves the natural relationships between joints. We show that our method is theoretically supportive, and provide intensive experiments to validate our method against recent state-of-the-art approaches. Our contributions are the following:
\begin{itemize}[leftmargin=*]
    \item We propose a new dataset for human dance motion estimation from the egocentric and music input. 
    \item We propose the EgoMusic Motion Network with Skeleton Mamba as the core to learn human body motion.
    \item We provide theoretical analysis and intensive experiments to demonstrate the effectiveness of our method.
\end{itemize}

\section{Related Work}

\paragraph{Human Motion from Egocentric Video.} Human motion estimation from egocentric video has garnered significant attention in recent years. Most existing methods assume partial visibility of body parts in the image, often using fish-eye cameras~\cite{jiang2021egocentric, wang2021estimating, wang2024egocentric, millerdurai2024eventego3d, tome2019xr, xu2019mo2cap2}. Other research addresses the challenge of body parts not being visible in egocentric footage~\cite{jiang2017seeing, luo2021dynamics, ng2020you2me, li2023ego}. Jiang \textit{et al.}~\cite{jiang2017seeing} introduce an innovative global optimization technique that utilizes both trained dynamic and scene classifiers along with pose coupling over an extended period. Ng \textit{et al.}~\cite{ng2020you2me} model person-to-person interactions, inferring the 3D ego-pose based on the other person’s pose. Luo \textit{et al.}~\cite{luo2021dynamics} jointly models kinematics and dynamics to estimate 3D human poses and human-object interactions. EgoFormer~\cite{li2023egoformer} extracts motion features from egocentric images and employs a Transformer Decoder to autoregressively generate human poses. In~\cite{li2023ego}, the authors introduce  EgoEgo, a hybrid learning method for head pose estimation, which then was used as a conditioning factor in a diffusion model to estimate full-body motions. 


\vspace{-0.5cm}
\paragraph{Human Motion from Music.} 
Generating human dance motion from Music is widely formed as a synthesis task. Early studies used statistical retrieval techniques to generate choreography by seamlessly transitioning between existing motion clips~\cite{fan2011example, lee2013music}. However, these methods rely on selecting pre-existing motions, often resulting in unnatural dance motions. With advancements in deep learning techniques and the availability of large-scale datasets, many networks have been introduced to generate higher-fidelity dance motions~\cite{au2022choreograph, siyao2022bailando, li2021ai, huang2022genre, li2022danceformer, fan2022bi, le2023music,le2024scalable}. The FACT model~\cite{li2021ai} introduces an autoregressive cross-modal transformer to generate long continuous dance sequences. Bailando~\cite{siyao2022bailando} employs VQ-VAEs for the upper and lower body segments to translate music and initial poses into dance sequences. Recent efforts have explored the use of diffusion models for dance generation~\cite{li2021ai, zhang2024bidirectional}. MoFusion~\cite{dabral2023mofusion} presents a multi-condition diffusion framework capable of generating long, realistic, and temporally coherent human motion sequences. EDGE~\cite{tseng2023edge} introduces an editable dance generation model that leverages a transformer-based diffusion architecture, offering flexible editing capabilities for dance applications. In~\cite{beatit}, the authors propose a framework that allows control of generated dance motion based on key-frame body pose and music beat conditions. However, all these works focus on generating dance motion using only the music as input, and the egocentric views are not taken into account. In this work, we present a new diffusion framework that aligns body movements with both music and egocentric video. 

\vspace{-0.5cm}
\paragraph{State Space Model.} 
State Space Model (SSM)~\cite{gu2023mamba} has garnered significant attention recently due to its potential for efficiently modeling long sequences with linear complexity. Its applications have been explored across various fields, including image processing~\cite{hu2024zigma, wang2024mamba, zhu2024vision, li2024mamba,lee2025meteor}, graph processing~\cite{wang2024graph, behrouz2024graph}, point cloud analysis~\cite{liang2024pointmamba, liu2024point, zhang2025voxel}, and human motion generation~\cite{zhang2024motion}. Vim~\cite{zhu2024vision} presents a bidirectional SSM block. Efforts like Mamba-ND~\cite{li2024mamba} extend the capabilities of SSM to higher-dimensional data by exploring different scan directions within a single SSM block. In addition, several works, including ZigMa~\cite{hu2024zigma} and DiffuSSM~\cite{yan2024diffusion}, utilize Mamba-based SSM blocks for efficient image generation. MotionMamba~\cite{zhang2024motion} proposes a symmetric multi-branch Mamba that processes temporal and spatial and shows exceptional performance on text-to-motion generation tasks. More recently, State Space Duality (SSD)~\cite{mamba2} is introduced as a dual-form framework that unifies state space models with structured masked attention. 

\section{The EgoAIST++ Dataset}
\label{sec_dataset}
While several datasets have been proposed for single input (either egocentric or music) human motion estimation (Table~\ref{tab: dataset_compare}), large-scale datasets that combine egocentric and music for dance pose are still limited. Ego-Exo4D dataset~\cite{grauman2024ego} has a subset with the egocentric view, music, and dance motion, but this subset is only approximately 2 hours. To address this gap, we introduce EgoAIST++, a new large-scale dataset that integrates egocentric views and music specifically for human dance pose estimation.

\begin{table}[h]
\centering
\small
\setlength{\tabcolsep}{0.15 em} 
\resizebox{\linewidth}{!}{
\begin{tabular}{lcccccc}
\toprule
Datasets & Music & Egocentric & Setup & \#Images & Camera Direction \\ \midrule
$\text{Mo}^2\text{Cap}^2$~\cite{xu2019mo2cap2} & \xmark & \cmark & Mocap & 530k & Downward-facing \\
xr-EgoPose~\cite{tome2019xr} & \xmark & \cmark & Simulation & 380k & Downward-facing \\
UnrealEgo~\cite{hakada2022unrealego} & \xmark & \cmark & Simulation & 450k & Downward-facing \\
EgoGTA~\cite{egogta} & \xmark & \cmark & Simulation & 320K & Downward-facing \\
EgoBody3M~\cite{egobody3M} & \xmark & \cmark & Mocap & 3.4M & Downward-facing \\
ECHP~\cite{echp} & \xmark & \cmark & Mocap & 75k & Downward-facing \\
ARES~\cite{li2023ego} & \xmark & \cmark & Simulation & 1.6M & Forward-facing \\
Ego-Exo4D~\cite{grauman2024ego} & \cmark & \cmark & Mocap & / & Forward-facing \\ \midrule
DanceNet~\cite{dancenet} & \cmark & \xmark & - & - & - \\
EA-MUD~\cite{EAMUD} & \cmark & \xmark & - & - & - \\
AIST++~\cite{li2021ai} & \cmark & \xmark & - & - & - \\ \midrule
EgoAIST++ (ours) & \cmark & \cmark & Mixed & 3.9M & Forward-facing \\ \midrule
\end{tabular}}
\vspace{-2ex}
\caption{\textbf{Human motion datasets comparison.}}
\label{tab: dataset_compare}
\vspace{-1ex}
\end{table}

\textbf{Setup.} 
We first utilize the AIST++ dataset~\cite{li2021ai} as it includes real-world well-defined human motion paired with the music. The data from AIST++ dataset was captured using a motion capture system which ensures the correctness of the human motion. We then use the Replica 3D indoor scene dataset~\cite{straub2019replica} to provide environment for obtaining the visual egocentric view. We randomly placed AIST++~\cite{li2021ai} sequences with a specified location and rotation in the 3D mesh scene of the Replica dataset.  For each sequence, we enforce the penetration constraint as in~\cite{wang2021synthesizing} to maintain the natural contact between the human and the objects in the scene. Other factors such as collision with surrounding objects are resolved manually by human annotators.



\textbf{Data Labelling and Statistic.}  We use AI Habitat~\cite{szot2021habitat}, to render high-quality and realistic egocentric images from a head-mounted camera of a virtual human and 3D mesh scene. We split the data from the AIST++ and Replica datasets to ensure the train and test sets have distinct music choreographies and scenes, with no overlap between them. The test set includes 40 unique music choreographies and 5 distinct scenes, while the training set consists of 980 music choreographies and 13 scenes for training. For each dance sequence, we divide it into 5-second subsequences and place them at a random location within the scene. Overall, our EgoAIST++ dataset has $36$ hours of motion with nearly 3.9M frames, recorded at 30 frames per second.

\section{EgoMusic-driven Dance Motion Estimation}
\begin{figure*}[ht]
    \centering
    \includegraphics[width=\linewidth]{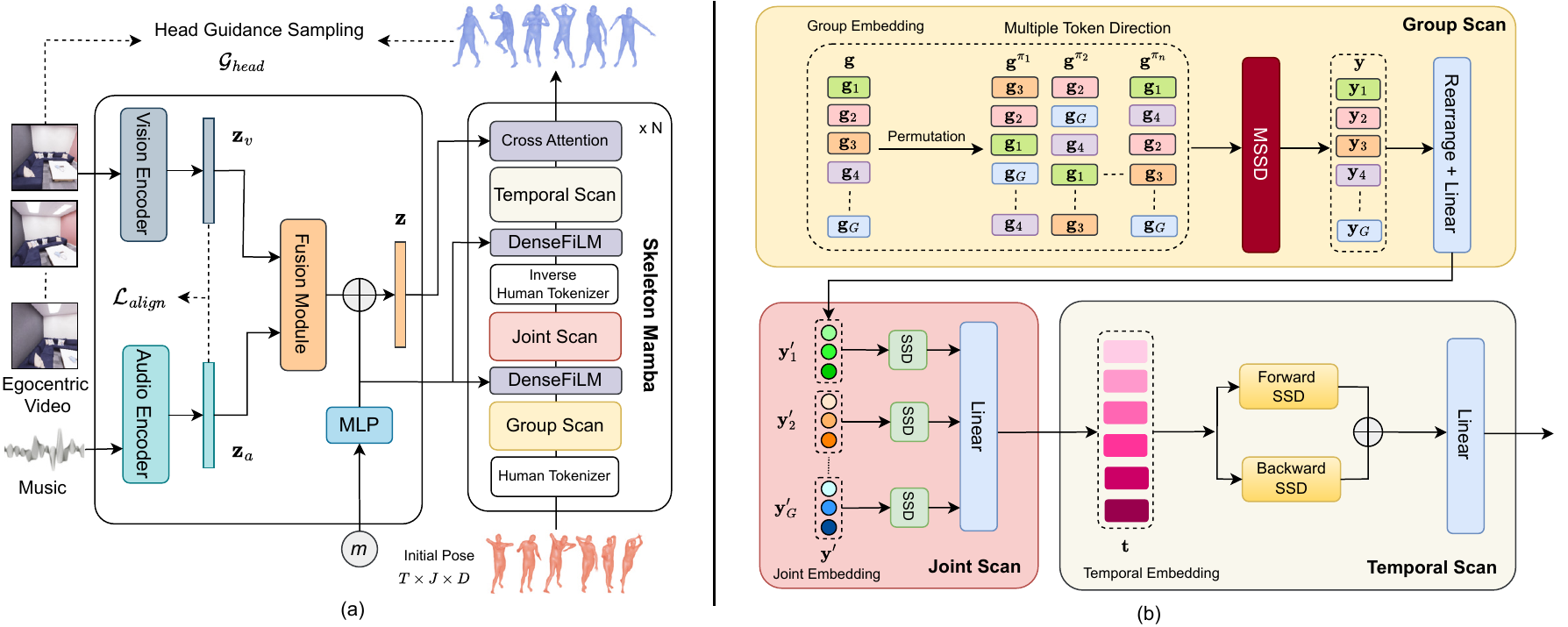}
    \vspace{-2ex}
    \caption{\textbf{Methodology overview.} (a) We propose a new diffusion model framework that generates human motion from egocentric video and music. (b) Detail architecture of three main components: Group Scan, Joint Scan, and Temporal Scan. Our model can effectively capture both the spatial and temporal dynamics in human motion data.}
    \label{fig: model}
    \vspace{-2ex}
\end{figure*}

\subsection{Problem Formulation}
Given an egocentric video represented as a sequence of frames, $\mathbf{v} = \{\mathbf{v}^1, \mathbf{v}^2, \dots, \mathbf{v}^T\}$, and a piece of music, $\mathbf{a} = \{\mathbf{a}^1, \mathbf{a}^2, \dots, \mathbf{a}^T\}$, both of duration $T$, our objective is to generate a human dance sequence, $\mathbf{x} = \{\mathbf{x}^1, \mathbf{x}^2, \dots, \mathbf{x}^T\}$, that aligns with the egocentric view and the audio. As in~\cite{li2023ego, tseng2023edge}, the human pose $\mathbf{x}$ is presented using the SMPL model~\cite{loper2023smpl}. We formulate our problem using a condition diffusion model \cite{ho2020denoising, dhariwal2021diffusion} where we represent the target motion as $\mathbf{x}_0$, and combine egocentric images $\mathbf{v}$ and music $\mathbf{a}$ as the condition $\mathbf{z}$ in the diffusion model. The objective of the diffusion process~\cite{ho2020denoising} is to gradually add noise into a clean dance motion \(\mathbf{x}_0 \) over a series of \( m \) steps:
\begin{equation}\label{eq: forward}
 q(\mathbf{x}_m | \mathbf{x}_0) = \mathcal{N}( \sqrt{\alpha_m} \mathbf{x}_0, (1 - \alpha_m) \mathbf{I} )\Comma
\end{equation}
where \( \alpha_m = \prod_{s=1}^{m} (1 - \beta_s) \), and \( \beta_m \) controls the noise schedule. The backward process is to learn the condition distribution \( p_\theta(\mathbf{x}_0 | \mathbf{z})\) with the condition $\mathbf{z}$ as in~\cite{ho2020denoising, dhariwal2021diffusion}. 



Our objective is to design an effective backward process. To generate human motion that is well-aligned with egocentric cues and music, we introduce a new EgoMusic Motion Network (EMM) centered around a core Skeleton Mamba scanning strategy to learn the structure of the human body. We provide empirical and theoretical evidence demonstrating the contribution of our Skeleton Mamba in learning human motion during the denoising process.  


\subsection{EgoMusic Motion Network}
The overall pipeline of our proposed method is illustrated in Fig.~\ref{fig: model}. First, we extract the features from the input egocentric images $\mathbf{v}$ using a deep network~\cite{he2016deep}. For music $\mathbf{a}$, we adopt the same feature extraction process as EDGE~\cite{tseng2023edge}, leveraging the pre-trained JukeBox~\cite{dhariwal2020jukebox} model to capture high-level music features, which are then processed by a transformer encoder~\cite{vaswani2017attention} to produce the final music embedding. The visual and music embeddings are subsequently aligned and integrated by the Fusion Module to form a joined embedding, denoted as \(\mathbf{z}\). This embedding is then fed into a conditional diffusion denoising process, which outputs the final denoised dance motion sequence. The denoising process is guided by the proposed Skeleton Mamba, which is designed to maintain the skeletal structure of the human pose and enhance the smoothness of the generated motion. We use feature-wise linear DenseFiLM~\cite{perez2018film} for timestep encoding and a Cross Attention layer to integrate the condition $\mathbf{z}$ into the denoising process.

\subsection{Skeleton Mamba}
\textbf{Motivation.} Estimating human pose from egocentric video and music requires spatial, temporal, and visual understanding. Previous methods~\cite{li2023ego, zhang2024motion, tseng2023edge} for human motion generation often overlook the structured patterns inherent in the human body. Skeleton-based methods, widely used in action recognition tasks~\cite{yan2018spatial, plizzari2021spatial}, effectively capture essential motion dynamics while handling challenges like partial visibility and occlusions common in egocentric perspectives. These methods show potential for our task, where a semantic understanding of body part dynamics is crucial. To address these needs, we propose Skeleton Mamba, a model that learns detailed body structures to enhance pose accuracy, ensuring head alignment with the egocentric view and synchronizing body movements with musical cues. Our Skeleton Mamba includes the Group Scan and Joint Scan strategy to learn the group-level representation (e.g., left arm, right arm) and joint-level dependencies within the human body. Then a Temporal Scan is applied to capture sequential dependencies over time. Our Skeleton Mamba is thus capable of effectively modeling both spatial and temporal dynamics within human motion data.


\textbf{Human Tokenizer.} Given the human representation $\mathbf{x}\in \mathbb{R}^{T \times J \times D}$, where $T$ is the number of frames, $J$ is the number of joints, and $D$ is the dimension, we first tokenize the human pose into $G$ overlapping joint groups, each group containing $P$ joints. This results in a group sequence represented as \( \mathbf{g} \in \mathbb{R}^{T \times G \times E} \):
\begin{equation}
   \mathbf{g}=[\mathbf{g}_1, \mathbf{g}_2, \ldots, \mathbf{g}_G] = \text{HumanTokenizer}(\mathbf{x})\Comma
\end{equation}
where $\mathbf{g}_i \in \mathbb{R}^{T\times 1 \times E}$ represents the embedding of each token in the group sequence, and $E=P\times D$ is the embedding dimension. The joints of the human body are grouped based on skeletal parts, with some joints shared across multiple groups. This grouping strategy reflects the symmetrical structure of the human body. 


\textbf{Group Scan with Multi-directional SSD.} 
Inspired by recent works \cite{zhang2024motion, wang2024text} that employed State Space Model (SSM)~\cite{gu2023mamba} for human motion data, we extend these ideas by adopting a multi-directional approach within the grouped structure, leveraging the State Space Duality (SSD)~\cite{mamba2} to encode human motion at the group level.
Our Group Scan rearranges the group tokens in multiple ways and learns them from multiple directions to enable a more comprehensive exploration of token relationships. Algorithm~\ref{alg: MSM} shows our Multi-directional SSD (MSSD). We first utilize \( n \) permutation operators \([\pi_1, \pi_2, \ldots, \pi_n]\), where each \(\pi_i\) is the element of symmetric group \(\text{Sym}(G)\). Each \(\pi_i\) reorders the group tokens of the group embedding \(\mathbf{g} \in \mathbb{R}^{T \times G \times E}\). For each permutation \(\pi_i\), the reordered embedding \(\mathbf{g}_i\) is obtained as \(\mathbf{g}^{\pi_i} = [\mathbf{g}_{\pi_i(1)}, \ldots, \mathbf{g}_{\pi_i(G)}] \in \mathbb{R}^{T \times G \times E}\). These \( n \) reordered embeddings are concatenated along the group dimension, forming a combined representation \(\bar{\mathbf{g}}_c\in \mathbb{R}^{T \times nG \times E}\). Embedding $\bar{\mathbf{g}}_c$ is processed by an SSD, and the resulting output is split back into \( n \) segments: \([\bar{\mathbf{g}}^{\pi_1}, \bar{\mathbf{g}}^{\pi_2}, \ldots, \bar{\mathbf{g}}^{\pi_n}]~\text{with}~\bar{\mathbf{g}}^{\pi_i} \in \mathbb{R}^{T \times G \times E}\). Each segment is then reordered back to its original token order using the corresponding inverse permutation \(\pi_i^{-1}\). Finally, the mean of the reordered embeddings is taken to produce the transformed group embedding \(\mathbf{y} \in \mathbb{R}^{T \times G \times E}\),
where $\mathbf{y}_i \in \mathbb{R}^{T\times 1 \times E}$ represents the $i$-th token in the transformed group sequence. 
\begin{algorithm}
\caption{Multi-directional SSD (MSSD)}
\label{alg: MSM}
\textbf{Input:} Group Embedding $\mathbf{g} : (T, G, E)$; \\
$n$ permutation $(\pi_1,\pi_2,\ldots, \pi_n), \pi_i\in \text{Sym}(G)$. \\
\textbf{Output:} Transformed group embedding $\mathbf{y} : (T, G, E)$.

\begin{algorithmic}[1]
\For{$i =1,\dots, n$}
    \State \textit{/* permutation sequence */}
    \State $\mathbf{g}^{\pi_i}: (T, G, E)=[\mathbf{g}_{\pi_i(1)}, \ldots, \mathbf{g}_{\pi_i(G)}]$
\EndFor
\State $\mathbf{g}_c: (T, nG, E) \gets \text{Concat}([\mathbf{g}^{\pi_1}, \mathbf{g}^{\pi_2}, ..., \mathbf{g}^{\pi_n}])$
\State $\bar{\mathbf{g}}_c: (T, nG, E) \gets \text{SSD}(\mathbf{g}_c)$
\State $[\bar{\mathbf{g}}^{\pi_1}, \bar{\mathbf{g}}^{\pi_2},\ldots, \bar{\mathbf{g}}^{\pi_n}]\gets \text{Split}(\bar{\mathbf{g}}_c)$
\For{$i = 1, \dots, n$}
    \State \textit{/* reverse to the original order */}
    \State $\bar{\mathbf{g}}^{\pi^{-1}_i}: (T, G, E)=[\bar{\mathbf{g}}_{\pi^{-1}_{i}(1)},\ldots, \bar{\mathbf{g}}_{\pi^{-1}_{i}(G)}]$
\EndFor
\State $\mathbf{y}: (T, G, E) \gets \text{Mean}([\bar{\mathbf{g}}^{\pi^{-1}_1}, \bar{\mathbf{g}}^{\pi^{-1}_2}, \ldots, \bar{\mathbf{g}}^{\pi^{-1}_n}])$
\State \textbf{Return} $\mathbf{y}$
\end{algorithmic}
\end{algorithm}

\textbf{Joint Scan.} To learn the human motion at the joint level, we transform each group embedding $\mathbf{y}_i$ to a sequence of individual joints represented as \( \mathbf{y}^\prime_i \in \mathbb{R}^{T \times P \times D}\) using a linear layer and rearrange operator. Each of these sequences is then processed by an SSD block to obtain transformed joint sequence embedding $\mathbf{y}^{\prime\prime}_i \in \mathbb{R}^{T \times P \times D}$:
\begin{equation}
\begin{aligned}
    \mathbf{y}^\prime_i &= \text{Rearrange}(\text{Linear}(\mathbf{y}_i))\Comma \\
    \mathbf{y}^{\prime\prime}_{i} &= \text{SSD}(\mathbf{y}^\prime_{i})\FullStop
\end{aligned}
\end{equation}
In total, $G$ separate SSD modules with shared weights are employed. The objective is to learn detailed intra-group dependencies, ensuring that the interactions within each joint sequence are effectively captured. Unlike group tokens, joint tokens have an inherent order, allowing a unidirectional SSD to be sufficient for learning. By integrating the Group Scan and Joint Scan, our approach effectively captures both high-level and detailed spatial dependencies in the human pose structure. All outputs of the SSD modules are then concatenated and fed to the Inverse Human Tokenizer to restore the original pose shape $\mathbf{t}\in \mathbb{R}^{T\times J\times D}$. 
\begin{equation}
    \mathbf{t} = \text{InverseHumanTokenizer}((\text{Concat}(\mathbf{y}^{\prime\prime}_{1}, \mathbf{y}^{\prime\prime}_{2}, \dots, \mathbf{y}^{\prime\prime}_{G})))\FullStop
\end{equation}

\textbf{Temporal Scan.}  We apply Temporal Scan to model the temporal dependencies across the temporal domain, enhancing the representation of dynamic pose changes over time. First, we swap the input dimension from $\mathbf{t}\in \mathbb{R}^{T\times J \times D}$ to $\mathbf{t}^\prime \in \mathbb{R}^{J \times T \times D}$. Then, the embedding $\mathbf{t}^\prime$ is processed by two SSD modules that scan the sequence in both backward and forward directions as in~\cite{wang2024text, li2025videomamba}, producing $\mathbf{t}_{backward}$ and $\mathbf{t}_{forward}$. Unlike Group Scan, which uses multiple directions, temporal sequences must preserve natural dependencies over time, allowing only backward and forward scanning. 

\begin{equation}\label{eq: temporal}
\begin{aligned}
    \mathbf{t}_{backward} &= \text{BackwardSSD}(\mathbf{t}^\prime)\Comma \\
    \mathbf{t}_{forward} &= \text{ForwardSSD}(\mathbf{t}^\prime) \FullStop
\end{aligned}
\end{equation}


\begin{theorem}
\label{prop: prop_1}
    Let \( S_J = \text{Sym}(J) \) denote the symmetric group of \( J \) elements, and let \( \text{HT}, \text{HT}^{-1} \) be short for the HumanTokenizer and InverseHumanTokenizer respectively. Suppose that the $HT(\cdot)$ and $HT^{-1}(\cdot)$ operator are fixed, such that the set $H=\{\sigma \in S_J | HT(\mathbf{x}^\sigma) = HT(\mathbf{x}) \}$ is a non-empty subgroup of $S_J$. Then for arbitrary continuous and $H$-equivariant function $g: \mathbb{R}^{J\times D} \rightarrow \mathbb{R}^{J\times D}$, compact set $K \subseteq \mathbb{R}^{J\times D}$, and $\epsilon>0$, there exists a function  $f: \mathbb{R}^{J\times D} \rightarrow \mathbb{R}^{J\times D}$ that constructed by our Skeleton Mamba such that:
\begin{equation*}
    ||f(\mathbf{x})-g(\mathbf{x})||_\infty < \epsilon,~\forall \mathbf{x} \in K.
\end{equation*}
\end{theorem}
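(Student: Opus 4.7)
The plan is to combine two ingredients: (i) a universal approximation property for compositions of SSD blocks with pointwise linear maps, and (ii) a Reynolds-type symmetrization argument that uses the permutation-averaging structure baked into our Multi-directional SSD to enforce $H$-equivariance at the output. The target function $g$ is $H$-equivariant on joint-space $\mathbb{R}^{J\times D}$, while the Skeleton Mamba pipeline naturally factors through group-space $\mathbb{R}^{G\times E}$ via $\text{HT}$ and $\text{HT}^{-1}$, so the bulk of the work is transferring equivariance between the two representations.

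First, I would establish a base universal approximation statement: the map $\phi:\mathbb{R}^{G\times E}\to \mathbb{R}^{G\times E}$ realised by an SSD block together with surrounding linear layers and the Joint Scan branch is dense, in the uniform topology on the compact set $\text{HT}(K)\subseteq \mathbb{R}^{G\times E}$, inside the space of continuous maps $C(\text{HT}(K),\mathbb{R}^{G\times E})$. This relies on existing universal approximation results for structured state-space / attention-type sequence models and the fact that the Joint Scan, together with the inverse tokenizer, is expressive enough to cover coordinate-wise nonlinear mixing; I would invoke these as black boxes rather than reproving them.

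Next, I would carry out the symmetrization step. Define the Reynolds average
\begin{equation*}
\mathrm{Sym}_{H}[\phi](\mathbf{g}) \;=\; \frac{1}{|H|}\sum_{\sigma\in H}\sigma^{-1}\cdot\phi(\sigma\cdot \mathbf{g}),
\end{equation*}
where the $H$-action on $\mathbb{R}^{G\times E}$ is the one induced from its joint-level action through $\text{HT}$; by the defining property of $H$ this action is well-defined and permutes tokens. A direct group-theoretic computation shows that $\mathrm{Sym}_H[\phi]$ is $H$-equivariant, and that $\mathrm{Sym}_H[g]=g$ whenever $g$ is already $H$-equivariant. Consequently, if $\phi$ approximates $g$ uniformly to within $\epsilon$ on the relevant compact set, then so does $\mathrm{Sym}_H[\phi]$, since averaging $\epsilon$-close functions stays $\epsilon$-close. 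I would then identify $\mathrm{Sym}_H[\phi]$ with the output of MSSD by taking $n=|H|$ and choosing the permutations $\pi_1,\dots,\pi_n$ to enumerate the elements of $H$ as permutations on $\text{Sym}(G)$; the concatenate-SSD-split-unpermute-mean pattern in Algorithm~\ref{alg: MSM} is exactly the Reynolds average of the SSD map applied token-wise. Composing with the fixed $\text{HT}$ and $\text{HT}^{-1}$ yields an $f:\mathbb{R}^{J\times D}\to\mathbb{R}^{J\times D}$ that, by construction, is $H$-equivariant and satisfies $\|f(\mathbf{x})-g(\mathbf{x})\|_\infty<\epsilon$ for all $\mathbf{x}\in K$.

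The main obstacle I expect is the clean identification of the $H$-action on group-space and its compatibility with both tokenizers. Specifically, one must check that every $\sigma\in H$, which by definition fixes $\text{HT}$ as a raw map, actually descends to a well-defined permutation of the $G$ group-tokens that commutes with the SSD block in the sense required by the symmetrization identity; equivalently, one must verify that the per-group Joint Scan (with shared weights across groups) is itself equivariant under this induced group-level permutation. I would handle this by first restricting attention to permutations that exchange entire overlapping joint groups and then arguing that other elements of $H$ act trivially on $\text{HT}(\mathbf{x})$ and hence contribute only identity terms in the sum. With this correspondence in place, the universal approximation of $\phi$ transfers through the symmetrization to give the claimed $\epsilon$-approximation of $g$.
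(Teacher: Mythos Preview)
Your proposal contains a structural gap at the key symmetrization step. By definition, $H=\{\sigma\in S_J : \text{HT}(\mathbf{x}^\sigma)=\text{HT}(\mathbf{x})\text{ for all }\mathbf{x}\}$, so every $\sigma\in H$ acts \emph{trivially} on group-space once it is pushed through $\text{HT}$: the induced $H$-action on $\mathbb{R}^{G\times E}$ is the identity. There is therefore no non-trivial homomorphism $H\to\text{Sym}(G)$ coming from the tokenizer, and your proposed choice ``take $n=|H|$ and let $\pi_1,\dots,\pi_n$ enumerate the elements of $H$ as permutations in $\text{Sym}(G)$'' collapses to $n$ copies of the identity permutation. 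The MSSD mean then degenerates to a single SSD pass and delivers no symmetrization whatsoever. Your final paragraph almost diagnoses this, but the suggested remedy --- work with permutations that ``exchange entire overlapping joint groups'' --- goes in the wrong direction: such permutations generically \emph{change} $\text{HT}(\mathbf{x})$ and hence lie \emph{outside} $H$, so they cannot appear in a Reynolds average over $H$ in the first place.

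There is a second, related issue on the output side. Any map of the form $f=\text{HT}^{-1}\circ\phi\circ\text{HT}$ automatically satisfies $f(\mathbf{x}^\sigma)=f(\mathbf{x})$ for all $\sigma\in H$, i.e.\ $f$ is $H$-invariant on inputs. For $f$ to be $H$-\emph{equivariant} one additionally needs $(f(\mathbf{x}))^\sigma=f(\mathbf{x})$, which forces the range of $\text{HT}^{-1}$ to lie inside the $H$-fixed subspace of $\mathbb{R}^{J\times D}$. You neither invoke nor establish such a property of $\text{HT}^{-1}$, and without it there exist $H$-equivariant targets $g$ whose values leave that fixed subspace and therefore cannot be uniformly approximated by any $f$ of this shape, no matter how expressive $\phi$ is. A workable argument must start from the tokenizer pair itself: show that the fixed $(\text{HT},\text{HT}^{-1})$ already encode the required equivariance, reduce the statement to ordinary universal approximation of a continuous map on the compact image $\text{HT}(K)\subseteq\mathbb{R}^{G\times E}$, and use the multi-directional permutations $\pi_i\in\text{Sym}(G)$ of Algorithm~\ref{alg: MSM} only to supply enough expressivity for that approximation step --- not as a realization of the group $H$.
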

\begin{proof}
    See Supplementary Material.
\end{proof}
\begin{remark}
The group \( H \) in the above theorem represents the set of human body symmetries. The function \( g \), which is \( H \)-equivariant, represents the unknown target function we aim to learn. In practice, \( g \) must respect the symmetries of the human body, meaning it must be \( H \)-equivariant. Thus, the assumption that \( g \) is \( H \)-equivariant is natural. Intuitively, Theorem~\ref{prop: prop_1} asserts that our Skeleton Mamba can effectively learn complex human motions, including those requiring precise coordination to follow both egocentric views and musical cues, while maintaining the equivariant properties associated with human body symmetries.
\end{remark}
\subsection{Training and Inference}
\textbf{Auxiliary Loss.} As in state-of-the-art work in human motion generation~\cite{tseng2023edge, siyao2022bailando, beatit}, we employ position loss \(\mathcal{L}_{pos}\) for accurate joint positioning and velocity loss \(\mathcal{L}_{vel}\) for smooth motion dynamics. To reduce the foot sliding effects, we also use the contact loss \(\mathcal{L}_{contact}\) as in~\cite{tseng2023edge}. The kinematic loss is expressed as follows:
\begin{equation}
    \mathcal{L}_{kin} = \lambda_{pos} \mathcal{L}_{pos} + \lambda_{vel} \mathcal{L}_{vel} + \lambda_{contact} \mathcal{L}_{contact}\FullStop
\end{equation}

\textbf{Ego-Music Alignment Loss.} We use \(\mathcal{L}_{align}\) to align egocentric video and music at the temporal level. This loss ensures the high-dimensional features from both modalities are unified in a shared space, enabling the denoising model to generate human motion that is more coherent and contextually aligned with both inputs. Given the music embedding \( \mathbf{z}_a \in \mathbb{R}^{T \times D_c} \) and egocentric vision embedding \( \mathbf{z}_v \in \mathbb{R}^{T \times D_c} \). This loss averages the symmetrical contrastive loss between vision and music embeddings: 
\begin{equation}\label{eq: align}
    \mathcal{L}_{align} = - \frac{1}{T} \sum_{i=1}^{T} \log \frac{\exp(\text{sim}(\mathbf{z}_a^i, \mathbf{z}_v^i) / \tau)}{\sum_{j=1}^{T} \exp(\text{sim}(\mathbf{z}_a^i, \mathbf{z}^j_v) / \tau)}\Comma
\end{equation}
where \(\text{sim}(\mathbf{z}_a^i, \mathbf{z}^j_v)\) represents the cosine similarity between the music embedding at frame \(i\) and the vision embedding at frame \(j\). The parameter \(\tau\) is a temperature scalar that controls the sharpness of the similarity distribution.

We employ the diffusion loss $\mathcal{L}_{simple}$ as in~\cite{ho2020denoising}. The total training loss can be formulated as follows:
\begin{equation}
    \mathcal{L}_{total} = \mathcal{L}_{simple} + \lambda_{kin} \mathcal{L}_{kin} + \lambda_{align} \mathcal{L}_{align}\FullStop
\end{equation}

\textbf{Head Guidance Sampling.} To enhance the consistency of the head movements in the generated dance with egocentric images, we define a goal function \(\mathcal{G}_{head}(\cdot)\) that guides the head to align closely with the head pose estimated from the egocentric images. This goal function consists of two components: a positional alignment term $\mathcal{G}_{pos}(\cdot)$ and a rotational alignment term $\mathcal{G}_{rot}(\cdot)$.
\begin{equation}\label{eq: headalign}
\begin{aligned}
    \mathcal{G}_{head}(\mathbf{x}) &= \gamma_{pos}\mathcal{\mathcal{G}}_{pos}(\mathbf{x}) + \gamma_{rot}\mathcal{G}_{rot}(\mathbf{x})\Comma \\
    \mathcal{G}_{pos}(\mathbf{x}) &= \frac{1}{T} \sum_{i=1}^{T} \|\mathbf{p}^{i} - \hat{\mathbf{p}}^{i}\|^2\Comma  \\
    \mathcal{G}_{rot}(\mathbf{x}) &= \frac{1}{T} \sum_{i=1}^{T} \|\log\left(\mathbf{R}^i \hat{\mathbf{R}}^{i \top}\right)\|_F^2\Comma
\end{aligned}
\end{equation}
where \( \mathbf{p}^i \) and \( \mathbf{R}^i \) represent the global head position and global head rotation matrix of the generated motion, respectively, and \( \hat{\mathbf{p}}^i \) and \( \hat{\mathbf{R}}^i \) are the corresponding estimated values calculated using the hybrid approach proposed in~\cite{li2023ego}.

With goal function $\mathcal{G}_{head}(\cdot)$, we formulate the guided sampling problem as optimizing the probability of constraint satisfaction:
\begin{equation}
\begin{aligned}
    p(\mathbf{x}_0 | \mathcal{O}=1, \mathbf{z})& \propto p_\theta(\mathbf{x}_0|\mathbf{z})p(\mathcal{O}=1|\mathbf{x}_0, \mathbf{z}) \\
    & \propto p_\theta(\mathbf{x}_0|\mathbf{z})\cdot \text{exp}(\mathcal{G}_{head}(\cdot))\Comma
\end{aligned} 
\end{equation}
where $\mathcal{O}$ is an indicator to check if the generated dance motion $\mathbf{x}_m$ at denoising step $m$ reaches the goal $\mathcal{G}_{head}(\cdot)$. Similar to~\cite{huang2023diffusion}, we use the first order Taylor expansion around $\mathbf{x}_m=\mathbf{\mu}$ to estimate $p(\mathcal{O}=1| \mathbf{x}_m, \mathbf{z})$:
\begin{equation}
    \text{log } p(\mathcal{O}=1|\mathbf{x}_m, \mathbf{z}) \approx (\mathbf{x}_m - \mu)\xi + \mathcal{C}\Comma
\end{equation}
where $\mu = \mu_\theta(\mathbf{x}_m, m, \mathbf{z})$, $\mathcal{C}$ is a constant, and $\xi$ is calculated as follows:
\begin{equation}
\begin{aligned}
    \xi & = \nabla_{\mathbf{x}_m} \text{log } p(\mathcal{O}=1|\mathbf{x}_m, \mathbf{z}) \big|_{\mathbf{x}_m = \mu} \\
    & = \nabla_{\mathbf{x}_m} \mathcal{G}_{head}(\cdot) \big|_{\mathbf{x}_m = \mu}\FullStop
\end{aligned}
\end{equation}
Hence, we have the sampling process with a goal function:
\begin{equation}
    p_\theta(\mathbf{x}_{m-1}|\mathbf{x}_m, \mathcal{O}=1, \mathbf{z}) = \mathcal{N}(\mathbf{x}_{m-1};\mu + \lambda\Sigma\xi,\Sigma)\Comma
\end{equation}
here $\Sigma=\Sigma_\theta(\mathbf{x}_m,m,\mathbf{z})$ and $\lambda$ is the scaling factor.

\graphicspath{./images} 
\section{Experiments}
\textbf{Baselines.} We compare our method EgoMusic Motion Network (EMM) with egocentric image-driven motion estimation works (PoseReg~\cite{yuan2019ego}, Kinpoly~\cite{luo2021dynamics}, EgoEgo~\cite{li2023ego}) and music-driven motion generation works (FACT~\cite{li2021ai}, Bailando~\cite{siyao2022bailando}, EDGE~\cite{tseng2023edge}). Since our task involves both egocentric video and music, for a fair comparison, we incorporate a music encoder, Jukebox~\cite{dhariwal2020jukebox}, to process audio input for PoseReg~\cite{yuan2019ego}, Kinpoly~\cite{luo2021dynamics}, EgoEgo~\cite{li2023ego}. For the music-driven works, we add visual features from the egocentric video using~\cite{he2016deep}. The implementation details of all baselines can be found in our Supplementary Material.

\textbf{Metrics.} Following~\cite{li2023ego}, we evaluate our method using five standard metrics commonly used in human pose estimation task: \textit{i)} $\mathbf{O_{\text{head}}}$, which measures the Frobenius norm between the predicted and actual head rotation matrices. \textit{ii)} $\mathbf{T_{\text{head}}}$, calculated as the average Euclidean distance between the predicted and true head translation. \textit{iii)} MPJPE, the Mean Per Joint Position Error, quantifies the average discrepancy in joint positions. \textit{iv)} Accel refers to the difference in acceleration between predicted and ground truth joint positions. \textit{v)} FS assesses foot skating, capturing unnatural foot movement. \textit{vi)} To evaluate how well the generated dance motions align with the music and egocentric video, we propose a new metric called the Motion-Music-Vision (MMV) (please refer to our Supplementary Material). 
\subsection{Main Results}
\begin{table}[ht]
\vspace{-1.5ex}
\centering
\renewcommand
\tabcolsep{2pt}
\hspace{1ex}
\vskip 0.1 in
\resizebox{\linewidth}{!}{
\begin{tabular}{@{}rcccccc@{}}
\toprule
Baseline & $\textbf{O}_{head}$$\downarrow$ & $\textbf{T}_{head}$$\downarrow$ & MPJPE$\downarrow$ & Accel$\downarrow$ & FS$\downarrow$ & MMV$\uparrow$\\
\midrule
Pose-Reg~\cite{ng2020you2me} & 1.78 & 423.56 & 351.37 & 37.14 & 98.79 & 0.182 \\
Kinpoly~\cite{luo2021dynamics} & 1.16 & 392.67 & 338.74 & 16.27 & 25.81 & 0.197 \\
EgoEgo~\cite{li2023ego} & 0.74 & 373.67 & 152.02 & 14.23 & 22.13 & 0.218 \\ \midrule
FACT~\cite{li2021ai} & 1.54 & 407.89 & 173.68 & 14.61 & 15.07 & 0.202 \\
Bailando~\cite{siyao2022bailando} & 1.57 & 411.43 & 175.31 & 14.72 & 15.46 & 0.210 \\
EDGE~\cite{tseng2023edge} & 1.52 & 404.62 & 167.43 & 14.37 & 14.75 & 0.224 \\ \midrule
EMM (music only) & 1.43 & 398.41 & 157.36 & 14.28 & 14.04 & - \\
EMM (ego only) & 0.61 & 355.16 & 186.49 & 16.02 & 13.45 & - \\
EMM (ego + music) & \textbf{0.53} & \textbf{342.37} & \textbf{137.54} & \textbf{11.84} & \textbf{12.79} & \textbf{0.262} \\ \bottomrule
\end{tabular}}
\vspace{0ex}
\caption{\textbf{Human motion estimation results.}}
\label{table: main-results}
\vspace{-2ex}
\end{table}

\begin{figure*}[ht]
    \centering
    \subfloat[Transformer]{\includegraphics[width=0.25\linewidth]{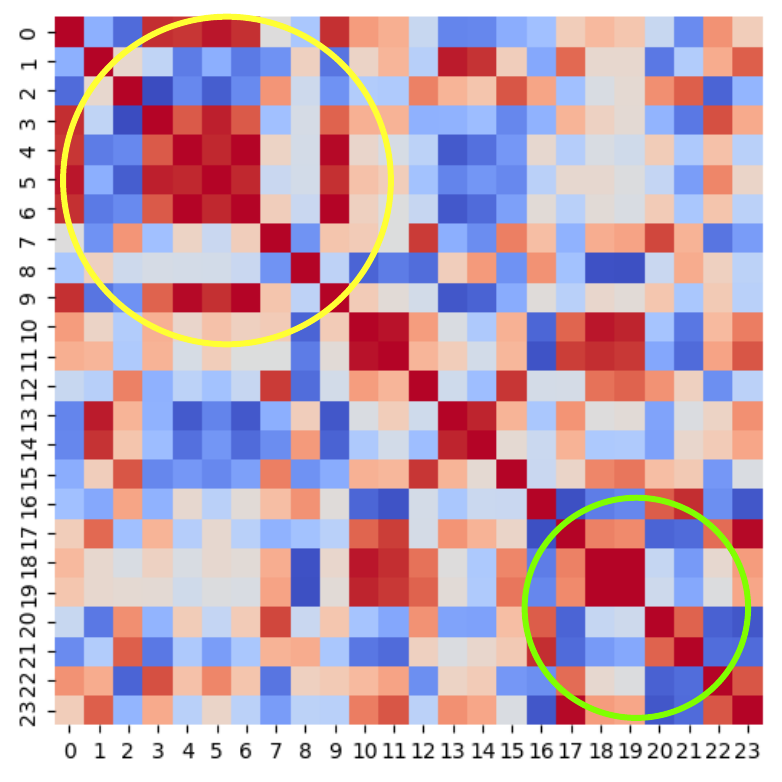}} \hfill
    \subfloat[Vanilla Mamba]{\includegraphics[width=0.25\linewidth]{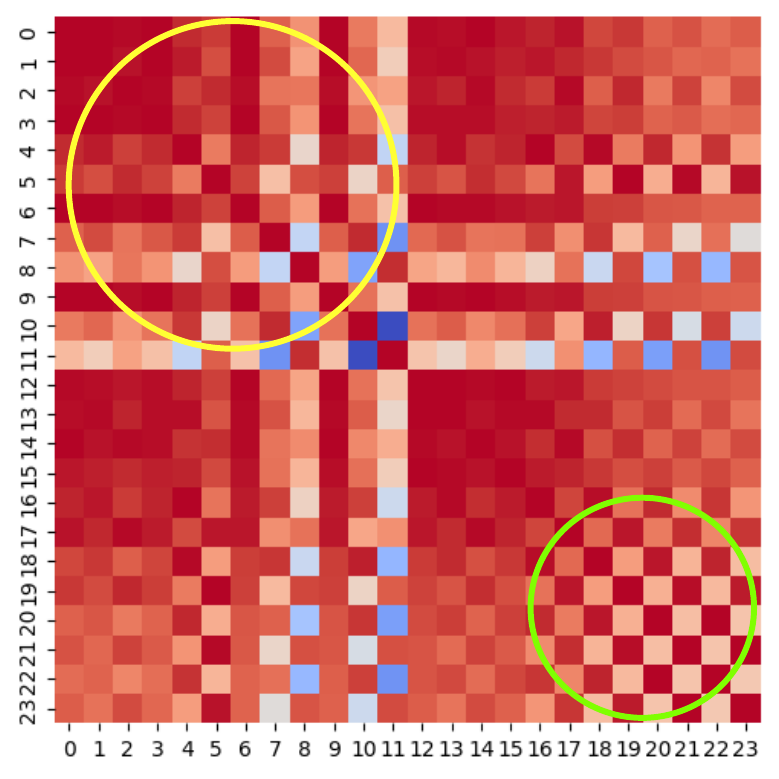}} \hfill
    \subfloat[Skeleton Mamba (ours)]{\includegraphics[width=0.303\linewidth]{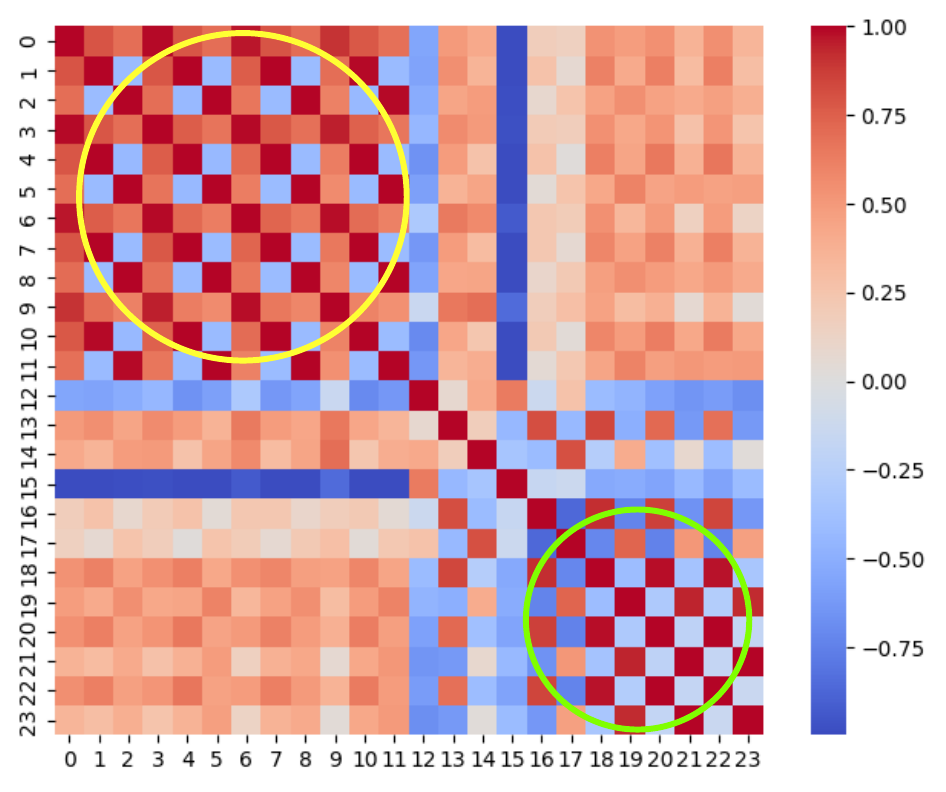}} \hfill
    \subfloat[Human Skeleton]{\includegraphics[width=0.19\linewidth]{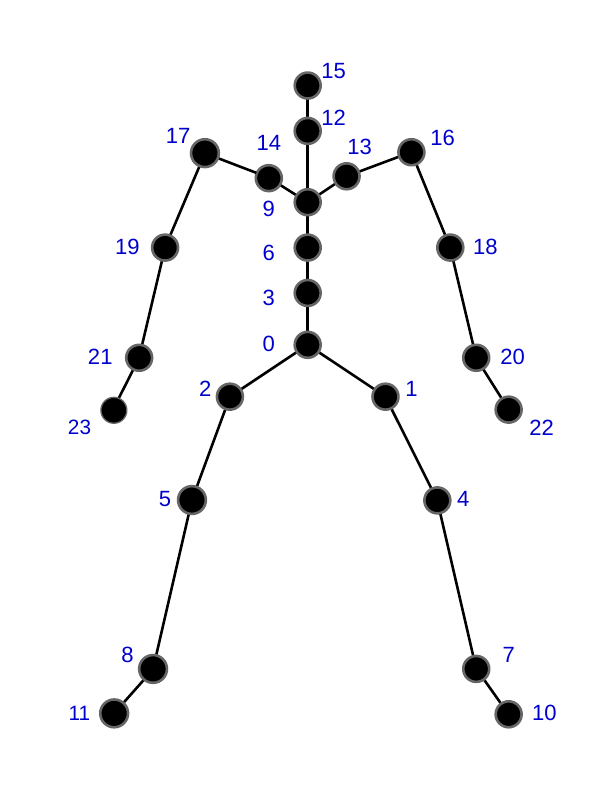}\label{fig:cosin_d}}
    \vspace{-1ex}
    \caption{\textbf{Cosine similarity of joint embeddings.} We calculate the cosine similarity of joint embeddings of: a) Transformer~\cite{tseng2023edge}, b) Vanilla Mamba~\cite{mamba2}, and c) Skelton Mamba (ours).  d) Visualization of the human body with 24 joints for reference. We highlight similarities in arm joints (green circle) and leg joints (yellow circle). Our method shows a clear distinction between left and right limbs.}
    \label{fig: cosine}
    \vspace{-2ex}
\end{figure*}

Table~\ref{table: main-results} presents the performance comparison between our method and other baselines. Table~\ref{table: main-results} shows that our EMM outperforms other baselines. EMM reduces head orientation error $\textbf{O}_{head}$ by \SI{0.21}{\radian} and head translation error $\textbf{T}_{head}$ by \SI{31.3}{\mm} compared to EgoEgo~\cite{li2023ego}. Furthermore, when compared to methods conditioned solely on music~\cite{li2021ai,siyao2022bailando,tseng2023edge}, our approach offers notable improvements in both accuracy in MPJPE and physical plausibility in FS metric. We also investigate the impact of each input modality, with results confirming that combining egocentric and music inputs enhances overall motion accuracy.

\subsection{Cross-dataset Results} To validate the generalization of our method, we conduct a cross-dataset experiment. We use the pretrained weights of all methods, originally trained on the EgoAIST++ dataset, to test on the EgoExo4D dataset~\cite{grauman2024ego}. EgoExo4D contains approximately two hours of dance sequences, along with egocentric videos and music captured in real-world. 
Table~\ref{table: cross-dataset} shows that our method consistently outperforms other baselines in the cross-dataset experiment. 
\begin{table}[t]
\vspace{-3ex}
\centering
\renewcommand
\tabcolsep{2pt}
\hspace{1ex}
\vskip 0.1 in
\resizebox{\linewidth}{!}{
\begin{tabular}{@{}rcccccc@{}}
\toprule
Baseline & $\textbf{O}_{head}$$\downarrow$ & $\textbf{T}_{head}$$\downarrow$ & MPJPE$\downarrow$ & Accel$\downarrow$ & FS$\downarrow$ & MMV$\uparrow$\\
\midrule
Pose-Reg~\cite{ng2020you2me} & 1.21 & 642.47 & 377.56 & 30.36 & 56.18 & 0.165 \\
Kinpoly~\cite{luo2021dynamics} & 0.78 & 354.19 & 251.27 & 17.84 & 25.31 & 0.187 \\
EgoEgo~\cite{li2023ego} & 0.67 & 347.23 & 234.58 & 16.76 & 20.15 & 0.203 \\ \midrule
FACT~\cite{li2021ai} & 1.37 & 685.81 & 244.89 & 17.54 & 18.53 & 0.195 \\
Bailando~\cite{siyao2022bailando} & 1.44 & 688.54 & 231.77 & 14.23 & 18.67 & 0.211 \\
EDGE~\cite{tseng2023edge} & 1.27 & 644.62 & 213.37 & 13.78 & 14.33 & 0.221 \\ \midrule
EMM (Ours) & \textbf{0.61} & \textbf{322.19} & \textbf{191.55} & \textbf{12.76} & \textbf{13.18} & \textbf{0.239} \\ \bottomrule
\end{tabular}}
\vspace{-2ex}
\caption{\textbf{Cross-dataset experiment results.}}
\label{table: cross-dataset}
\vspace{-3ex}
\end{table}

\subsection{Skeleton Mamba Analysis}
\textbf{Can Skeleton Mamba learn human skeleton?} To answer this, we compute the cosine similarity matrices of joint embeddings for Transformer~\cite{tseng2023edge}, Vanilla Mamba~\cite{mamba2}, and our Skeleton Mamba. Fig.~\ref{fig: cosine} shows that Skeleton Mamba captures the human body structure more clearly. We highlight the similarity between joints in the arms (green circles) and the legs (yellow circles) in Fig.~\ref{fig: cosine}. In particular, the joints of the right arm (join 17, 19, 21, and 23 in Fig.~\ref{fig:cosin_d}) exhibit strong similarity to each other. By contrast, joint 20, which belongs to the left arm, shows low similarity with the joints in the right arm, indicating effective separation between limbs. In comparison, Transformer~\cite{tseng2023edge} and Vanilla Mamba~\cite{mamba2} models display less distinct differentiation.

\begin{table}[ht]
\centering
\setlength{\tabcolsep}{0.14 em}
\renewcommand{\arraystretch}{1.3}
\resizebox{\linewidth}{!}{
\begin{tabular}{c|lcccccc}
\toprule
Dimension & Scan Type & $\textbf{O}_{head}$$\downarrow$ & $\textbf{T}_{head}$$\downarrow$ & MPJPE$\downarrow$ & Accel$\downarrow$ & FS$\downarrow$ & MMV$\uparrow$ \\ \midrule
\multirow{2}{*}{Spatial} & Unidirectional~\cite{mamba2} & 0.63 & 386.12 & 192.38 & 16.21 & 14.65 & 0.245 \\
 & Bidirectional~\cite{zhang2024motion} & 0.58 & 357.81 & 160.12 & 14.02 & 14.12 & 0.255 \\ \midrule
Temporal & Unidirectional~\cite{mamba2} & 0.61 & 371.62 & 181.43 & 15.29 & 15.47 & 0.251 \\ \midrule
Temporal \& Spatial & Skeleton Mamba & \textbf{0.53} & \textbf{342.37} & \textbf{137.54} & \textbf{11.84} & \textbf{12.79} & \textbf{0.262} \\
\bottomrule
\end{tabular}}
\vspace{-2ex}
\caption{\textbf{Scanning strategy analysis}. 
}
\label{tab: scan_ablation}
\vspace{-2ex}
\end{table}
\textbf{Scan strategy analysis.} Table~\ref{tab: scan_ablation} shows the impact of different scanning schemes on learning human motion. When applying the scanning to the spatial domain, unidirectional scanning~\cite{mamba2} and bidirectional scanning~\cite{zhang2024motion} show reasonable results but are lower than our Skeleton Mamba. 
Table~\ref{tab: scan_ablation} also shows that our method, which uses bidirectional scanning (i.e., ForwardSSD and BackwardSDD in Equation~\ref{eq: temporal}) for temporal processing, outperforms the unidirectional approach. 
This experiment and Theorem~\ref{prop: prop_1} confirm that by learning in both temporal and spatial domains, our Skeleton Mamba can effectively handle the geometry of human motion that aligns with both egocentric and music input.


\begin{figure}[t]
\includegraphics[width=\linewidth]{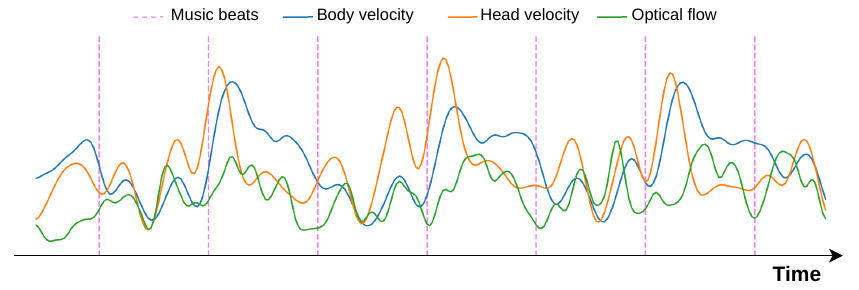}
    \vspace{-5ex}\caption{\label{fig: correlation} 
    \textbf{Motion, music, and egocentric view correlation.}
    }
    \vspace{-4ex}
\end{figure}

\subsection{Ablation Study}
\textbf{Motion, music, and egocentric correlation.} We plot the kinematic velocity, music beats, and optical flow extracted from the egocentric video to visualize the correlation between these three. The music beats are calculated using the beat extracting algorithm~\cite{mcfee2015librosa}. The motion beats are extracted as the local extrema of the kinematic velocity. The optical flow is extracted from the egocentric video using RAFT~\cite{teed2020raft}. Fig.~\ref{fig: correlation} shows that the generated dance aligns well with the music beat and optical flow. 
The results show that our model effectively synchronizes both body and head movements with the respective audio and visual cues, generating well-coordinated motion from multimodal inputs.

\begin{table}[t]
\centering
\vspace{-2ex}
\setlength{\tabcolsep}{0.2 em}
\renewcommand{\arraystretch}{1}
\resizebox{\linewidth}{!}{
\begin{tabular}{lcccccc}
\toprule
Ablation & $\textbf{O}_{head}$$\downarrow$ & $\textbf{T}_{head}$$\downarrow$ & MPJPE$ \downarrow$ & Accel$\downarrow$ & FS$\downarrow$ & MMV$\uparrow$ \\ \midrule
EMM & 0.53 & 342.37 & 137.54 & 11.84 & 12.79 & 0.262 \\
EMM w.o. $\mathcal{G}_{head}$ & 0.76 & 374.86 & 146.38 & 12.71 & 13.34 & 0.254 \\
EMM w.o. $\mathcal{L}_{align}$ & 0.56 & 350.06 & 142.47 & 12.16 & 13.09 & 0.242 \\ \bottomrule
\end{tabular}}
\vspace{-0.5ex}
\caption{\textbf{Loss analysis}.}
\label{tab: loss_anal}
\end{table}

\textbf{Loss analysis.} Table~\ref{tab: loss_anal} shows that each loss function plays an essential role in our network. In particular, $\mathcal{L}_{align}$ positively impacts all metrics, especially in terms of motion-music-vision synchronization. Additionally, the head guidance loss $\mathcal{G}_{head}$,  applied during the sampling process, substantially increases the accuracy of head movements, ensuring better consistency with the egocentric view.

\textbf{Quality results.} Fig.~\ref{fig: compare_fig} shows qualitative comparisons between EDGE~\cite{tseng2023edge}, EgoEgo~\cite{li2023ego} and our method. EgoEgo~\cite{li2023ego} has difficulty generating dance motions that follow the choreography. EDGE~\cite{tseng2023edge} generates dance movements in sync with the music, but its head movements do not correspond to the egocentric input. In contrast, our approach demonstrates a more cohesive generation of both dance and head movements. 

\begin{figure}[t] 
   \centering
   \normalsize
\resizebox{\linewidth}{!}{
\setlength{\tabcolsep}{2pt}
\begin{tabular}{ccc}
\rotatebox[origin=l]{90}{\hspace{0.23cm} 
{\begin{tabular}[c]{@{}c@{}}View\end{tabular}}} &
\shortstack{\includegraphics[width=\linewidth]{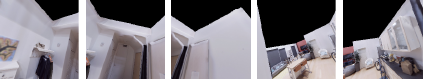}} \\[0.01pt]

\rotatebox[origin=l]{90}{\hspace{0.1cm}
{\begin{tabular}[c]{@{}c@{}}EDGE~\cite{tseng2023edge}\end{tabular}}} &
\shortstack{\includegraphics[width=\linewidth]{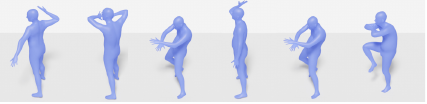}} \\[0.01pt]

\rotatebox[origin=l]{90}{\hspace{0.01cm} 
{\begin{tabular}[c]{@{}c@{}}EgoEgo~\cite{li2023ego}\end{tabular}}} &
\shortstack{\includegraphics[width=\linewidth]{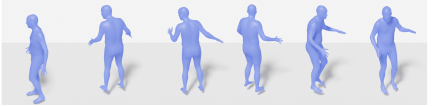}} \\[0.01pt]

\rotatebox[origin=l]{90}{\hspace{0.6cm} 
{\begin{tabular}[c]{@{}c@{}}Ours\end{tabular}}} &
\shortstack{\includegraphics[width=\linewidth]{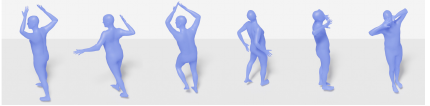}} \\[0.01pt]

\rotatebox[origin=l]{90}{\hspace{-0.1cm} 
{\begin{tabular}[c]{@{}c@{}}Ground Truth\end{tabular}}} &
\shortstack{\includegraphics[width=\linewidth]{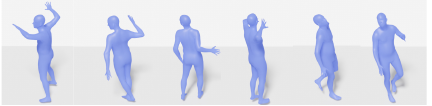}} \\[0.01pt]
\end{tabular}}
    \vspace{-2ex}\caption{\textbf{Qualitative comparison.} Our approach produces well-aligned human motion with ego view and music.}
    \label{fig: compare_fig}
    \vspace{-3ex}
\end{figure}


\subsection{Skeleton Mamba on Human Motion Tasks}
To further evaluate the effectiveness of our Skeleton Mamba in modeling human motion, we conduct experiments on text-to-human motion generation and skeleton-based action recognition tasks. Implementation details of both tasks can be found in our Supplementary Material. 

\textbf{Text-to-motion generation.} We evaluate our Skeleton Mamba on HumanML3D~\cite{guo2022generating} dataset. We compare with recent work MDM~\cite{tevet2023human}, MLD~\cite{chen2023executing} and \cite{zhang2024motion} using metrics in~\cite{guo2022generating}. Table~\ref{tab: text2motion_re} shows that our method outperforms other baselines in the text-to-human motion generation task.

\begin{table}[t]
\vspace{-1.2ex}
\renewcommand\arraystretch{1.2}
\renewcommand{\tabcolsep}{1.0pt}
\small
\centering
\parbox[b]{0.480 \linewidth}{
\resizebox{0.97\linewidth}{!}{
\begin{tabular}{lccccc}
\toprule
Method   & R Prec.$\uparrow$& MM Dist$\downarrow$ & Div$\rightarrow$  \\ \midrule
Ground Truth      & 0.797              & 2.974        & 9.503  \\ \hline
MDM~\cite{tevet2023human}    & 0.611        & 5.566       & 9.559     \\
MLD~\cite{chen2023executing}  & 0.772    & 3.196   & 9.724       \\
MMamba~\cite{zhang2024motion}    & 0.790  & 3.060  & 9.871 \\
Ours & \textbf{0.795}  & \textbf{2.983}  & \textbf{9.484} \\ \bottomrule
\end{tabular}}
\vspace{-2ex}
\caption{\textbf{Text2motion results.}}
\label{tab: text2motion_re}
}
\hfill
\parbox[b]{0.495\linewidth}{
\resizebox{1.05\linewidth}{!}{
\begin{tabular}{l|cc}\hline
Method & NTU60-XS & Kinetics\\ \hline
MS-G3D~\cite{liu2020disentangling} & 91.5 & 38.0 \\
PoseConv3D~\cite{duan2022revisiting}  & 93.1 & 47.7 \\ 
MotionBERT~\cite{zhu2023motionbert} & 93.0 & - \\ 
DSTA-Net~\cite{shi2020decoupled} & 91.5 & - \\
Ours & \textbf{94.4} & \textbf{52.4} \\ \hline
\end{tabular}
}
\vspace{-2.5ex}
    \caption{\textbf{Action rec. results.}}
    \label{tab: act_reg_re}
}
\vspace{-3ex}
\end{table}

\textbf{Human action recognition.} We benchmark on two datasets: Kinetics400~\cite{kay2017kinetics} and NTU RGB+D 60~\cite{shahroudy2016ntu,liu2019ntu}. We compare with recent methods, including MS-G3D~\cite{liu2020disentangling}, PoseConv3D~\cite{duan2022revisiting}, MotionBERT~\cite{zhu2023motionbert} and DSTA-Net~\cite{shi2020decoupled}. Top-1 classification accuracy is used as the metric. Table~\ref{tab: act_reg_re} shows that our method clearly outperforms the other action recognition baselines.  Tables~\ref{tab: text2motion_re} and \ref{tab: act_reg_re} demonstrate that while our Skeleton Mamban is designed for human motion modeling, it has the potential to generalize effectively to various setups, including generative and recognition tasks.

\vspace{-1ex}
\section{Discussion}
\vspace{-1ex}
\textbf{Broader Impact.} We believe our work represents a significant step toward understanding human motion dynamics and potentially has a profound impact on different fields such as VR/AR, metaverse, or film animation. For instance, in VR dance games~\cite{sarupuri2024dancing, chan2010virtual,laattala2024wave}, current systems rely on egocentric cameras and additional motion tracking devices such as hand VR motion controllers. By fusing music cues with egocentric data, however, full-body motion can be accurately estimated, eliminating the need for additional sensors. Moreover, our Skeleton Mamba has the potential in other tasks such as human motion synthesis~\cite{tevet2023human}, human action recognition~\cite{xu2022vitpose,tripathi20233d,yan2018spatial}, gesture analysis~\cite{liu2021imigue}, and human-object interaction~\cite{kim2021hotr}.

\textbf{Limitations.} Although our method achieves encouraging results, it still presents certain limitations. First, while our model effectively generates smooth motion sequences, it is not fully optimized for producing very long motion sequences due to the reliance on a simple bidirectional scan in the temporal dimension. Second, when the egocentric video and the music input are not appropriately paired, our model may fail to generate coherent and synchronized motion.


\textbf{Conclusion.} We introduce a new task and method to estimate human dance motion from egocentric and music.  
Our network with the core Skeleton Mamba effectively estimates motion that aligns with both visual and musical cues. We further contribute EgoAIST++ dataset, which provides egocentric, music, and dance groundtruth. Intensive experiments show that our method significantly outperforms existing state-of-the-art approaches, and our Skeleton Mamba has the potential in human motion understanding tasks.

{\small
\bibliographystyle{ieee_fullname}
\bibliography{egbib}
}


\end{document}